\documentclass{article}

\usepackage{xspace} 
\usepackage{microtype}
\usepackage{graphicx}
\usepackage{subcaption}
\usepackage{booktabs} 
\usepackage{hyperref}
\usepackage{enumitem}
\usepackage{cuted}
\usepackage{caption}
\usepackage{tabularx}
\usepackage{multirow}
\usepackage{algpseudocode}
\usepackage{wrapfig}

\usepackage{enumitem}
\newcommand{\largebold}[1]{\scalebox{1.2}{\textbf{#1}}}

\usepackage[preprint]{icml2026}

\usepackage{amsmath}
\usepackage{amssymb}
\usepackage{mathtools}
\usepackage{amsthm}
\usepackage{sidecap}

\usepackage{xstring}
\usepackage{xcolor}
\definecolor{mydarkblue}{rgb}{0,0.08,0.45}
\definecolor{airforceblue}{rgb}{0.36, 0.54, 0.66}
\hypersetup{
    colorlinks=true,
    linkcolor=airforceblue,
    filecolor=airforceblue,      
    urlcolor=airforceblue,
    citecolor=airforceblue,
    }
\definecolor{segray}{gray}{0.45}

\newcommand{\se}[1]{\textcolor{segray}{\tiny\,\ensuremath{\pm}\,{#1}}}
\newcommand{\semth}[1]{\textcolor{segray}{\scriptstyle\,\pm\,#1}}

\usepackage[compact]{titlesec}
\titlespacing{\section}{0pt}{1.2ex}{0.6ex}
\titlespacing{\subsection}{0pt}{1.0ex}{0.5ex}

\usepackage[capitalize,noabbrev]{cleveref}

\newcommand{\App}{App.\@ }

\theoremstyle{plain}

\theoremstyle{definition}

\theoremstyle{remark}

\newtheorem{claim}{Claim}

\usepackage[textsize=tiny]{todonotes}

\newcommand{\methodname}{\emph{VISTA}\xspace}
\newcommand{\methodfull}{\textbf{V}alidation-Informed \textbf{S}elf-\textbf{T}rajectory \textbf{A}daptation\xspace} 

\icmltitlerunning{\methodname: Validation-Informed Trajectory Adaptation via Self-Distillation}

\begin{document}

\twocolumn[
  \icmltitle{\methodname: Validation-Informed Trajectory Adaptation via Self-Distillation}

  \icmlsetsymbol{equal}{*}

  \begin{icmlauthorlist}
    \icmlauthor{Eli Corn}{yyy} 
    \icmlauthor{Daphna Weinshall}{yyy} 
  \end{icmlauthorlist}

  \icmlaffiliation{yyy}{Department of Computer Science, Hebrew University of Jerusalem, Jerusalem, Israel} 

  \icmlcorrespondingauthor{Eli Corn}{eli.corn@mail.huji.ac.il} 
  \icmlcorrespondingauthor{Daphna Weinshall}{daphna@mail.huji.ac.il} 
  \icmlkeywords{Machine Learning, ICML}
  \vskip 0.3in
]


\printAffiliationsAndNotice{}  

\begin{abstract}
\methodfull
Deep learning models may converge to suboptimal solutions despite strong validation accuracy, masking an optimization failure we term \emph{Trajectory Deviation}. This is because as training proceeds, models can abandon high-generalization states for specific data sub-populations, thus discarding previously learned latent features without triggering classical overfitting signals. To address this problem we introduce \methodname, an online self-distillation framework that enforces consistency along the optimization trajectory. Using a validation-informed Marginal Coverage score, \methodname identifies expert anchors, which are earlier model states that retain specialized competence over distinct data regions. A coverage-weighted ensemble of these anchors is integrated online during the training of a \emph{single model}, regularizing the loss landscape and preserving mastered knowledge. When evaluated across multiple benchmarks, \methodname demonstrates improved robustness and generalization over standard training and prior self-distillation methods, while a lightweight implementation reduces storage overhead by 90\% without performance loss.


\end{abstract}

\section{Introduction}


In the standard deep learning paradigm, optimization success is typically assessed by the monotonic improvement of global validation metrics. However, recent analysis of training dynamics \citep{stern2025local} reveals that this macroscopic trend frequently masks a critical failure mode we call \emph{Trajectory Deviation}: a process where the model identifies high-generalization features for specific latent structures during mid-training, only to subsequently abandon these regions in favor of broader global optimization. This localized erosion is particularly elusive because it persists undetected by aggregate performance tracking; while total accuracy continues to rise, the model functionally discards previously mastered features, resulting in a converged state that is structurally suboptimal. Crucially, this optimization instability is not adequately addressed by standard regularization, early stopping, or traditional self-distillation.


\begin{center}
    \centering
    \includegraphics[width=.9\columnwidth]{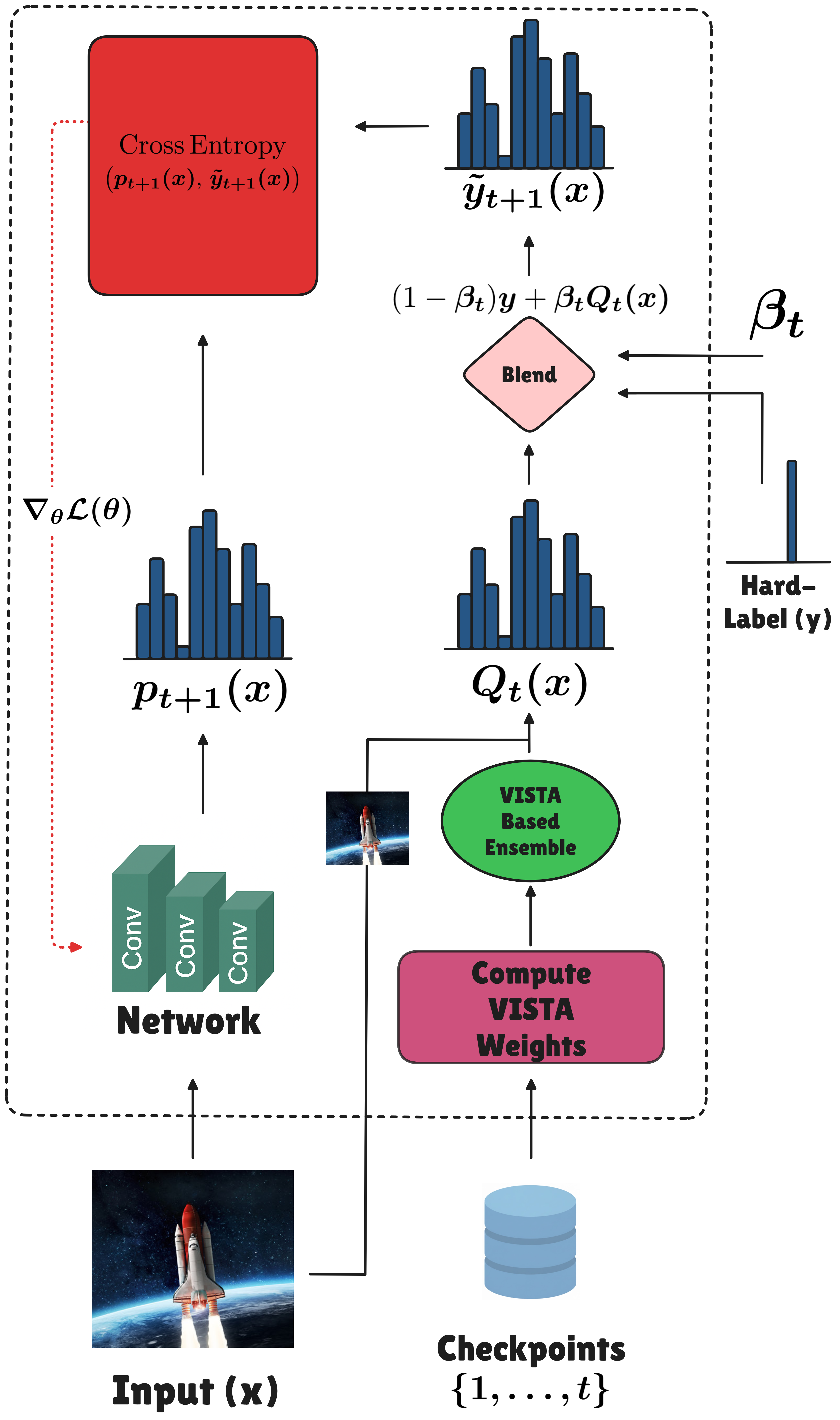}
    
    \captionof{figure}{
        Overview of the \methodname self-distillation flow at training epoch $t{+}1$: predictions from epochs ${1..t}$ are combined with the one-hot label to form a coverage-weighted teacher target.
    }
    \label{fig:cafe_flow_2d}
\end{center}

\begin{figure*}[t!] 
\begin{center}
\includegraphics[width=0.95\textwidth]{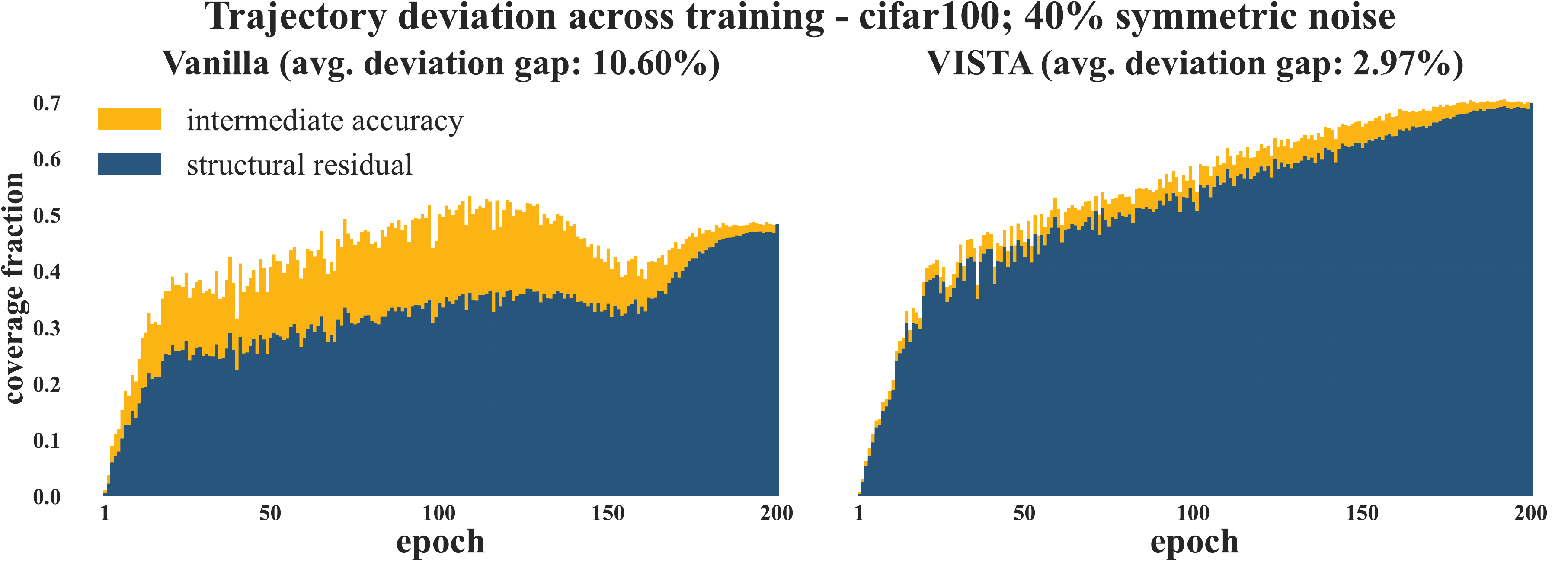} 
\end{center}
\vspace{-6pt} 
\raggedright{\hspace{0.17\textwidth}\small{(a) \hspace{0.43\textwidth} (b)}}
\caption{Trajectory Deviation and Structural Consistency. Yellow bars denote \emph{Intermediate Accuracy} at each checkpoint, while blue bars indicate the \emph{Structural Residual}: the subset of those specific predictions that remain correct in the converged model. The vertical gap represents the \emph{Deviation Gap} - mastered knowledge that was later discarded. (a) Standard CE baseline; (b) \methodname.  
}
\vspace{-0.2cm}
\label{fig:forget_ce_vs_cafe}
\end{figure*}




We illustrate the impact of \emph{Trajectory Deviation} in Figure~\ref{fig:forget_ce_vs_cafe} using two overlapping bars per checkpoint. The yellow bars represent a model's checkpoint accuracy on a validation set $\mathcal{V}$, which we term the \textbf{Intermediate Accuracy}. In contrast, the blue bars represent the \textbf{Structural Residual}: the subset of those specific samples that the model correctly identified at that epoch and successfully retained until the end of training. Formally, let $C_e=\{i\in \mathcal{V}:\text{correct at epoch }e\}$ and $C_E=\{i\in \mathcal{V}:\text{correct at the final epoch }E\}$. The vertical gap between these bars, which we call the \textbf{DeviationGap}($e$), is given by:
\begin{equation}
\begin{aligned}
    &\mathrm{Deviation Gap}(e) := \\
    & \quad \underbrace{\frac{|C_e|}{|\mathcal{V}|}}_{\text{Intermediate Accuracy}} - \underbrace{\frac{|C_e \cap C_E|}{|\mathcal{V}|}}_{\text{Structural Residual}} = \frac{|C_e \setminus C_E|}{|\mathcal{V}|}
\end{aligned}
\end{equation}
This gap quantifies the optimization trajectory’s drift away from high-fidelity latent regions. As shown in Figure~\ref{fig:forget_ce_vs_cafe}a, standard training exhibits large gaps, indicating that the final model fails on samples it previously classified correctly. 

Traditionally, to mitigate such information loss, researchers have proposed variants of \emph{temporal ensembling} \citep{laine2017temporal}. These methods update the current loss using an average of past predictions to smooth the training signal and enhance generalization. As an alternative, several approaches construct ensembles of intermediate checkpoints to achieve a similar effect \citep{toosifar2025subnetwork}. Building on the emerging notion of localized optimization failure, the specialized \emph{Knowledge Fusion (KF)} method addresses this by selecting a subset of checkpoints based on a validation-driven score and combining them into a weighted ensemble for inference. While effective, this approach faces notable practical limitations: it often requires massive storage for checkpoints and increase inference complexity.

In Section~\ref{sec:method}, we propose a different online approach which follows the paradigm of temporal ensembling while specifically targeting the manifestations of \emph{Trajectory Deviation}. Specifically, in \methodname (illustrated in Figure~\ref{fig:cafe_flow_2d}), we adopt a coverage-based notion to steer training online, introducing a \emph{Marginal Coverage} score that identifies past model states acting as ``expert anchors'' for specific data regions. Although this training-time score differs from the deviation gap, which refers to the final model, Figure~\ref{fig:forget_ce_vs_cafe} illustrates its impact: with \methodname, accuracy is elevated across epochs, the double-descent dynamic vanishes, and the deviation gap is markedly decreased. 

\methodname bridges two previously independent ideas to enforce structural consistency. On one hand, self-distillation stabilizes training by reusing a model’s own predictions; on the other hand, validation performance can reveal which patterns are at risk of being discarded. By combining these notions, \methodname transforms self-distillation into a targeted, online mechanism for rolling mastered knowledge forward. This integration turns insights about optimization paths into a practical, single-model procedure that preserves knowledge without sacrificing efficiency. In Section~\ref{sec:results}, we compare \methodname with a broad suite of alternative baselines, showing its overall superiority in providing better accuracy while maintaining equal or lower complexity.

Summary of main contributions:
\begin{itemize}[leftmargin=*,itemsep=0.2em, topsep=0.em, parsep=0pt, partopsep=0pt]
    \item We introduce the \emph{Marginal Coverage} score to identify and prioritize expert states along the model's self-trajectory.
    \item We propose \textbf{\methodname}, an online self-distillation framework that uses these scores to enforce structural consistency during training.
    \item We demonstrate superior performance and robustness over SOTA baselines across multiple architectures with single-model inference.
    \item We provide a lightweight variant that achieves a 90\% reduction in storage overhead while preserving peak performance.
\end{itemize}

\section{Related work}
\label{sec:related_work}

\paragraph{Leveraging past training information}
Several methods smooth the training process by leveraging information from previous epochs instead of relying only the current loss. Temporal ensembling \citep{laine2017temporal} averages predictions across epochs to improve semi-supervised learning. In noisy-label settings, approaches such as MentorNet \citep{jiang2018mentornet} and Meta-Weight-Net \citep{shu2019meta} down-weight unreliable samples using historical loss statistics. More recently, EMA-based methods have been shown to improve robustness to noisy labels and generalization \citep{morales2024exponential}. Additionally, checkpoint ensembles have been proposed to aggregate multiple training snapshots, enhancing performance \citep{toosifar2025subnetwork}. 


\paragraph{Trajectory Deviation and Knowledge Fusion}
Recent work shows that deep networks can lose previously acquired structure late in training, as in trajectory deviation, where models abandon high-generalization states for specific data sub-populations. Knowledge Fusion (KF)~\cite{stern2025local} addresses this via a weighted ensemble of validation-selected checkpoints, but its post-hoc design, checkpoint storage, hyperparameter sweeps, and ensemble inference overhead limit its scalability.

\paragraph{Self-distillation baselines and scope}
We benchmark \methodname against three main families of self-distillation techniques that rely on \emph{soft targets} but differ in how the “teacher” is formed.  
(1) \textbf{Label-smoothing / teacher-free methods:} Teacher-Free KD (TF-KD)~\citep{yuan2020tfd} and Zipf-scheduled label smoothing (Zipf’s LS)~\citep{zipf_table2_proxy}.  
(2) \textbf{Current-epoch teachers:} methods that form teachers from intermediate predictions within the same epoch, including BYOT~\citep{zhang2019byot}, DLB~\citep{shen2024dlb}, CS-KD~\citep{yun2020cskd}, DDGSD~\citep{xu2019ddgsd}, DKS~\citep{sun2019dks}, FRSKD~\citep{ji2021frskd}, and Self-Distillation with Dropout~\citep{zhu2022sddropout}.  
(3) \textbf{Past-epoch teachers:} methods that aggregate predictions across training time, such as PS-KD~\citep{kim2020pskd}, SAT~\citep{huang2020sat}, EWR-KD~\citep{xia2021ewrkd}, TSD~\citep{liu2024tsd}, and KF.  
For completeness we also report Born-Again Networks (BAN)~\citep{furlanello2018ban}, a widely used multi-round self-distillation baseline. 


\paragraph{How the baseline methods construct soft targets}

\emph{(1) Label-smoothing / teacher-free.} TF-KD reframes distillation as label smoothing without a teacher, while Zipf’s LS shapes non-target probabilities via a Zipf distribution at negligible cost.
\emph{(2) Current-epoch teachers.} BYOT transfers from deeper to shallower heads; DLB regularizes with last mini-batch predictions; CS-KD enforces class consistency; DDGSD aligns augmented views; DKS adds auxiliary branches with pairwise consistency; FRSKD refines features/logits through an auxiliary branch; and SD with Dropout ensembles stochastic subnetworks.
\emph{(3) Past-epoch teachers.} SAT uses momentum-averaged targets with reweighting; EWR-KD adds uncertainty-aware reweighting; TSD builds tolerant targets from stored predictions; and KF ensembles checkpoints via validation-driven forgetting scores. BAN retrains successive student generations. 
Finally, PS-KD, the most competitive baseline (see Table~\ref{tab:cifar100_full_metrics}), uniformly aggregates prior predictions. In contrast, \methodname is guided by \emph{validation coverage of expert anchors}, identifying specific data regions at risk of trajectory deviation. This targeted weighting generates distinct soft targets for online self-distillation within a single training run.

\section{Method}
\label{sec:method}

\methodname, described in Sections~\ref{cafe-basic}-\ref{sec:loss}, trains a \emph{single} network in a single pass. It begins by asking the following question at each epoch: \textit{Which past states act as expert anchors for regions of the feature space that the current model is starting to deviate from?} 
To answer this question, we developed a notion of coverage with respect to a small held-out 
validation set, which is used to estimate test-set coverage\footnote{\App~\ref{app:val_as_surrogate} provides empirical evidence 
showing that validation accuracy closely tracks test accuracy even under label noise
.}. 
If a checkpoint correctly classifies validation examples that subsequent checkpoints err on, we say that it provides \emph{marginal coverage}. This \emph{coverage} score determines the influence assigned to the respective checkpoint, when forming the soft targets that guide subsequent training.

A lightweight variant that significantly reduces space and time complexity is described in Section~\ref{subsec:light-cafe}. The performance difference between the two variants is examined in the ablation study (Table~\ref{tab:fat_vs_thresh}). 
Section~\ref{sec:asys} summarizes the theoretical justification and complexity analysis, while \App~\ref{app:theo_and_comlex_asys} contains the complete derivations and proofs.

\subsection{Basic \methodname method}
\label{cafe-basic}

Algorithm \ref{alg:fat_coverage_distill} presents the procedure used by \methodname to generate blended targets after completing epoch $t$. These targets are subsequently used for optimization in epoch $t+1$. Each step of this per-epoch computation is described in detail in the remainder of this section (see Figure~\ref{fig:cafe_flow_2d}).

\begin{algorithm}[htb]
\caption{\methodname: epoch $t$} 
\label{alg:fat_coverage_distill}

\textbf{Input}:  
Validation set $\mathcal{V}$; 
coverage subsets $\{C_s\}_{s=1}^{t}$; 
validation accuracies $\{a_s\}_{s=1}^{t}$; 
checkpoint predictions $\{p_s(x)\}_{s=1}^{t}$; 
scheduler $\beta_t \in [0,1]$.

\textbf{Output}: Blended target for next epoch optimization.

\begin{algorithmic}[1]
\State \textbf{Order checkpoints:} $S_t \gets \{1,\dots,t\}$ sorted by $a_s$ (high $\to$ low) 
\label{step1}
\State \textbf{Marginal coverage sweep:} $U \gets \emptyset$; \textbf{for} $s \in S_t$ \textbf{do} $\Delta_s \gets |C_s \setminus U|$; \; $U \gets U \cup C_s$ 
\label{step2}
\State \textbf{Normalize coverage gains:} $\widehat{\Delta}_s \gets \Delta_s \Big/ \sum_{j \in S_t} \Delta_j$ 
\label{step3}
\State \textbf{Construct coverage-weighted teacher:} $Q_t(x) \gets \sum_{s \in S_t} \widehat{\Delta}_s \, p_s(x)$ 
\label{step4}
\State \textbf{Form blended target for next epoch:} $\tilde y_{t+1}(x) \gets (1-\beta_t)\,\text{one-hot}(y) + \beta_t\,Q_t(x)$ 
\label{step5}
\State \textbf{Return}  $\tilde y_{t+1}(x)$
\end{algorithmic}
\end{algorithm}

\subsection*{Step~\ref{step1}: Order checkpoints}
\vspace{-0.2cm}
All checkpoints in the range $s\in[1\ldots t]$ are sorted based on their accuracy on the validation set $\mathcal{V}$, providing the sorted list $S_t=\{s_j\}_{j=1}^t$. We then sweep through this ordered list from beginning (most accurate) to end (least accurate).


\subsection*{Step~\ref{step2}: Marginal coverage sweep}
\vspace{-0.2cm}
While sweeping through list $S_t$, we assign to each checkpoint its marginal coverage of the validation set. More specifically, the first checkpoint $s_1$ (with the highest accuracy) is assigned $\vert C_{s_1}\vert$ - the number of points that it correctly classifies in the validation set. The second checkpoint $s_2$ (with the second highest accuracy) is assigned $\vert C_{s_2} \setminus C_{s_1}\vert$ - the number of points in the validation set that it correctly classifies, and that checkpoint $s_1$ does \textbf{not} classify correctly. This continues until all the checkpoints are assigned a non-negative value, according to the following formulae:
\begin{equation}
\label{eq:weights}
\Delta_{s_i} \;=\; \bigl|\, C_{s_i} \setminus \bigcup_{j=1}^{i-1} C_{s_j} \,\bigr|.
\end{equation}

Figure~\ref{fig:marginal_coverage} illustrates how marginal coverage is computed after sorting checkpoints by validation accuracy. We relabel the sorted checkpoints as $s_1$, $s_2$, and $s_3$, so that $|C_{s_1}| \ge |C_{s_2}| \ge |C_{s_3}|$. The left-hand panel shows the validation examples each checkpoint correctly classifies. Since $C_{s_1}$ and $C_{s_2}$ do not overlap ($C_{s_1} \cap C_{s_2} = \emptyset$), their marginal coverages equal their total sizes: $\Delta_{s_1} = |C_{s_1}|$ and $\Delta_{s_2} = |C_{s_2}|$. Checkpoint $s_3$ overlaps with earlier ones, so only its unique contribution $C_{s_3} \setminus (C_{s_1} \cup C_{s_2})$ counts, giving $\Delta_{s_3} < |C_{s_3}|$. This highlights the dependence on ordering, ensuring that weaker checkpoints are credited only for patterns not already captured by stronger ones.


\begin{figure}[h]
  \centering
  \includegraphics[width=0.75\linewidth]{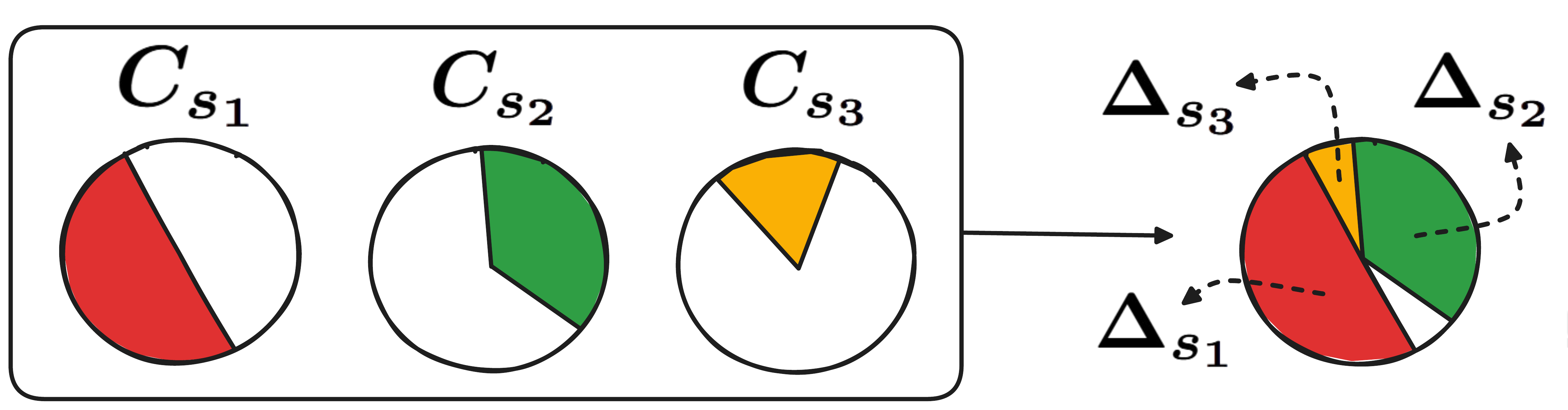}
  \caption{
  Illustration of marginal coverage: checkpoints are ranked by validation accuracy (left), 
  then their unique contributions are computed (right) to produce $\Delta_s$ values, to be used for weighting. 
  }
  \label{fig:marginal_coverage}
\end{figure}

\subsection*{Steps~\ref{step3} and~\ref{step4}: coverage-weighted teacher}
\vspace{-0.2cm}
Next, we define the \emph{teacher distribution} $Q_t(x)$ by averaging checkpoint predictions with weights proportional to their \emph{marginal validation coverage}; this distribution is blended later with the hard label in the loss:
\begin{equation}
\label{eq:teacher}
Q_t(x)\;=\;\sum_{s\in S_t}\widehat{\Delta}_s\,p_s(x),\qquad 
\widehat{\Delta}_s \;=\; \frac{\Delta_s}{\sum_{j\in S_t}\Delta_j},
\end{equation}

where $S_t$ is the set of checkpoints considered up to epoch $t$
, $p_s(x)$ their post-softmax class probabilities (prediction vector), and $\Delta_s$ the marginal coverage counts from Step~\ref{step2}.

\subsection{\methodname loss at epoch \ensuremath{t+1}}
\label{sec:loss}

We obtain the \methodname loss by blending the teacher with the hard label using a \emph{parameter-free} exponential schedule:
\[
u_t\;=\;\frac{t-1}{E-1}\in[0,1],\qquad
\beta_t\;=\;\frac{1-\exp(-2\,u_t)}{1-\exp(-2)}.
\]
Here, $E$ denotes the total number of training epochs, so that $u_t$ linearly interpolates between $0$ and $1$ over the course of training.

Gradually increasing reliance on the soft teacher has been shown to improve stability and generalization in self-distillation, 
we therefore choose an exponential rise for $\beta_t$, mirroring the typical exponential decay of train error \citep{hestness2017deep}, so that reliance on $Q_t(x)$ grows smoothly as the model stabilizes.  The constant “2” is there to ensure a fast-then-plateau curve.

Given $Q_{t}(x)$ and $\beta_{t}$, the per-sample target $\tilde y_{t+1}(x)$ and loss $\mathcal{L}_{t+1}$ can now be derived:
\begin{align*}
    \tilde y_{t+1}(x) &= (1-\beta_{t})\,\text{one-hot}(y) + \beta_{t}\,Q_{t}(x), \\
    \mathcal{L}_{t+1} &= \text{CE}\big(p_{t+1}(x), \tilde y_{t+1}(x)\big).
\end{align*}

Above $\mathrm{CE}$ denotes the cross-entropy, namely, the $KL$ divergence to $\tilde y_{t+1}(x)$ up to a constant.
This simple convex combination lets the hard label anchor the semantics, while the coverage-weighted teacher
injects structure that preserves information otherwise prone to trajectory deviation.

\begin{table*}[t]
\centering
\scriptsize
\caption{\textbf{CIFAR-100N and CIFAR100.} Mean test accuracy (\%, error over 3 random seeds) on image classification datasets using a ResNet-18 backbone. The best performer in each column is highlighted in bold. The last row shows the improvement of the best performer over Vanilla ERM with early stopping.} 
\label{table:cifar100_combined}
\resizebox{\textwidth}{!}{
\begin{tabular}{l||l||l|l|l|l||l|l}
\toprule
\multicolumn{1}{l||}{\textbf{Method / Dataset}}
  & \multicolumn{1}{c||}{\textbf{CIFAR-100N}}
  & \multicolumn{4}{c||}{\textbf{CIFAR100 (Symmetric Noise)}} 
  & \multicolumn{2}{c}{\textbf{CIFAR100 (Asymmetric Noise)}} \\
\multicolumn{1}{r||}{noise level}
  & 40\% human
  & Clean & 20\% & 40\% & 60\%
  & 20\% & 40\% \\
\midrule
\emph{Vanilla + Early Stopping}
  & $54.53 \semth{.44}$
  & $78.66 \semth{.11}$ & $64.98 \semth{.11}$ & $59.37 \semth{.36}$ & $50.92 \semth{.48}$
  & $66.61 \semth{.12}$ & $49.57 \semth{.10}$ \\
\midrule
\emph{SAT}
  & $53.18 \semth{.43}$
  & $77.84 \semth{.01}$ & $71.97 \semth{.10}$ & $66.67 \semth{.09}$ & $55.79 \semth{.91}$
  & $74.43 \semth{.07}$ & $63.26 \semth{.13}$ \\
\emph{KF Ensemble}
  & \textemdash
  & $79.13 \semth{.14}$ & $72.80 \semth{.10}$ & $67.00 \semth{.10}$ & \textemdash
  & $74.20 \semth{.10}$ & $62.10 \semth{.50}$ \\
\methodname
  & $\mathbf{63.92} \semth{.08}$
  & $\mathbf{80.25} \semth{.05}$ & $\mathbf{74.09} \semth{.01}$ & $\mathbf{69.75} \semth{.14}$ & $\mathbf{59.42} \semth{.26}$
  & $\mathbf{75.61} \semth{.18}$ & $\mathbf{67.17} \semth{.10}$ \\
\midrule
\emph{Improvement (over Vanilla)}
  & $\largebold{9.39} \semth{.45}$
  & $\largebold{1.59} \semth{.12}$ & $\largebold{9.11} \semth{.11}$ & $\largebold{10.38} \semth{.39}$ & $\largebold{8.5} \semth{.54}$
  & $\largebold{9.0} \semth{.22}$ & $\largebold{17.6} \semth{.14}$ \\
\bottomrule
\end{tabular}
}
\end{table*}

\begin{table*}[t]
\centering
\scriptsize
\caption{\textbf{TinyImageNet.} Mean test accuracy comparing our method and baselines. Mean test accuracy, see caption of Table~\ref{table:cifar100_combined}.}
\label{table:tinyimagenet_combined}
\resizebox{\textwidth}{!}{
\begin{tabular}{l||l|l|l|l||l|l}
\toprule
\multicolumn{1}{l||}{\textbf{Method / Dataset}}
  & \multicolumn{4}{c||}{\textbf{TinyImageNet (Symmetric Noise)}} 
  & \multicolumn{2}{c}{\textbf{TinyImageNet (Asymmetric Noise)}} \\
\multicolumn{1}{r||}{noise level}
  & Clean & 20\% & 40\% & 60\%
  & 20\% & 40\% \\
\midrule
\emph{Vanilla + Early Stopping}
  & $65.37 \semth{.01}$ & $56.32 \semth{.22}$ & $49.62 \semth{.23}$ & $40.02 \semth{.38}$
  & $58.12 \semth{.18}$ & $43.42 \semth{.17}$ \\
\midrule
\emph{SAT}
  & $63.10 \semth{.20}$ & $60.59 \semth{.17}$ & $54.01 \semth{.20}$ & $39.92 \semth{.23}$
  & $62.45 \semth{.02}$ & $52.73 \semth{.10}$ \\
\emph{KF Ensemble}
  & $68.50 \semth{.36}$ & $62.80 \semth{.20}$ & $57.00 \semth{.50}$ & \textemdash
  & \textemdash &  \textemdash \\
\methodname
  & $\mathbf{68.91} \semth{.01}$ & $\mathbf{63.20} \semth{.03}$ & $\mathbf{58.20} \semth{.16}$ & $\mathbf{47.12} \semth{.20}$
  & $\mathbf{64.71} \semth{.03}$ & $\mathbf{53.69} \semth{.04}$ \\
\midrule
\emph{Improvement (over Vanilla)}
  & $\largebold{3.54} \semth{.01}$ & $\largebold{6.88} \semth{.22}$ & $\largebold{8.58} \semth{.28}$ & $\largebold{7.10} \semth{.43}$
  & $\largebold{6.59} \semth{.18}$ & $\largebold{10.27} \semth{.17}$ \\
\bottomrule
\end{tabular}
}
\end{table*}

\subsection{Light \methodname}
\label{subsec:light-cafe}

At epoch $t$, \methodname\ computes the \emph{marginal validation coverage} 
$\Delta_s$ of all past checkpoints and forms a coverage-weighted teacher 
as in~(\ref{eq:teacher}).  
We introduce a lighter, more efficient variant - \emph{Light} \methodname - that 
uses a tolerance parameter $\tau \ge 0$ to prune checkpoints with negligible 
marginal coverage.  
When $\tau=0$, \textbf{\emph{Light} \methodname is provably equivalent to Basic \methodname}, yet it dramatically reduces storage and computation requirements.  
Specifically, we leverage the following property:

\begin{claim}[Restated]
\label{claim:1}
Let $\mathcal{B}_t = \{s \in S_t \mid \Delta_s^t > 0\}$ be the set of checkpoints with non-zero marginal coverage at epoch $t$. For any $T > t$, $\mathcal{B}_T \cap \{S_t \setminus \mathcal{B}_t\} = \emptyset$.
\end{claim}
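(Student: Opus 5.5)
The plan is to show that, for each old checkpoint $s \in S_t$, its marginal coverage can only shrink as new checkpoints are added. The claim then follows immediately: if $\Delta_s^t = 0$, then $\Delta_s^T \le \Delta_s^t = 0$, so $s \notin \mathcal{B}_T$. Concretely, I would write $\Delta_s^T = |C_s \setminus P_s^T|$, where $P_s^T$ is the union of the coverage sets $C_{s'}$ over all checkpoints $s'$ that precede $s$ in the sorted list $S_T$. This is just Eq.~(\ref{eq:weights}) rewritten in terms of predecessors. The whole argument then reduces to a set inclusion, $P_s^t \subseteq P_s^T$, because $A \subseteq B$ implies $|C_s \setminus B| \le |C_s \setminus A|$.

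First I would record the facts that make this inclusion possible. A checkpoint's prediction function $p_s$ is frozen once epoch $s$ ends, and the validation set $\mathcal{V}$ is fixed. Hence both $C_s$ and $a_s = |C_s|/|\mathcal{V}|$ are fixed quantities that do not depend on the current epoch $t$ or $T$. Next, $S_T$ is obtained from $S_t$ by inserting the new checkpoints $t+1,\dots,T$ into an ordering by the fixed key $a_s$. Therefore the relative order of any two old checkpoints $s, s' \le t$ is the same in $S_t$ and $S_T$. It follows that every $s'$ preceding $s$ in $S_t$ still precedes $s$ in $S_T$. Newly inserted checkpoints can only add further predecessors, which gives $P_s^t \subseteq P_s^T$.

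The main obstacle I expect is ties in validation accuracy. If two checkpoints have equal $a_s$ and the sort is not deterministic, their relative order could flip between epochs. In that case a checkpoint with zero marginal coverage at epoch $t$ could gain positive coverage later. I would handle this by fixing a consistent tie-breaking rule as part of Step~\ref{step1}, for example breaking ties by checkpoint index, or equivalently using a stable sort applied incrementally. This makes the order a fixed total order on all checkpoints, so the restriction argument above goes through verbatim. I would state this convention explicitly as an implicit assumption of the algorithm. With it in place, monotonicity $\Delta_s^T \le \Delta_s^t$ holds for all $T > t$, and this yields $\mathcal{B}_T \cap (S_t \setminus \mathcal{B}_t) = \emptyset$.
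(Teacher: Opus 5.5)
Your proof is correct and is essentially the paper's argument: the paper argues by contradiction one epoch at a time ($T-1 \to T$), splitting on whether the new checkpoint $T$ lands after $s$ (predecessors of $s$ unchanged, so $\Delta_s$ is unchanged) or before it (predecessors grow from $\tilde S$ to $\tilde S \cup \{T\}$), which is exactly your inclusion $P_s^t \subseteq P_s^T$ obtained directly for all $T>t$. Your explicit tie-breaking convention adds rigor the paper lacks. The paper's case $|C_s| \ge |C_T|$ silently assumes that a new checkpoint tied with $s$ is placed after it and that old checkpoints keep their relative order, and you are right that without some fixed rule the claim can fail (e.g., two top-ranked checkpoints with identical coverage sets swapping order).
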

Intuitively, since a checkpoint's redundancy is measured against a union of sets that only grows over time, any checkpoint discarded today will never provide new information in the future. This allows \methodname\ to permanently prune the model pool (see \App~\ref{app:proof_tau_equivalence} for the full proof).

The complete pseudocode can be found in \App~\ref{app:light-cafe-alg}. The resulting 
storage effects are detailed in 
Section~\ref{sec:storage_empirical}.

\paragraph{Difference from Basic \methodname:} Consider set $S_t$ in (\ref{eq:teacher}), and let $\mathcal{A}_t\subseteq   S_t$ denote the set of \emph{contributing checkpoints}, which are all checkpoints for which $\Delta_s~>~\tau$ for some $\tau\geq 0$. In the lightweight version of \methodname, teacher $Q_t(x)$ is redefined in Step
~\ref{step4} as follows:
\begin{equation}
\label{eq:teacher-tau}
Q_t(x)\;=\;\sum_{s\in \mathcal{A}_t}\widehat{\Delta}_s\,p_s(x),\qquad 
\widehat{\Delta}_s \;=\; \frac{\Delta_s}{\sum_{j\in \mathcal{A}_t}\Delta_j}.
\end{equation}
The empirical findings in Section~\ref{sec:storage_empirical} demonstrate that a conservative choice of $\tau$ can shrink the size of the contributing checkpoints set $\vert\mathcal{A}_t\vert$ by orders of 
magnitude relative to $\vert S_t\vert$, with $\tau=0.01$ yielding roughly a 
90\% reduction in storage space without any significant loss of performance.

\section{Experiments}
\label{sec:results}

\subsection{Main results}
\label{subsec:main_results}

We evaluate \methodname on CIFAR-100 (Table~\ref{table:cifar100_combined}) and TinyImageNet (Table~\ref{table:tinyimagenet_combined}) using ResNet-18 under clean labels as well as symmetric, asymmetric, and human label noise (CIFAR-100N).\footnote{All results in the main body use the Basic \methodname without thresholding, unless stated otherwise.}  
Comparisons include standard cross-entropy (CE) with early stopping, SAT, and the KF ensemble.  
We specifically include SAT because, among self-distillation methods in the literature, it is the best performing approach that explicitly targets noisy-label scenarios. 
As DLB does not report matching numerical results, we defer comparison to Section~\ref{subsec:dlb}.
Across all settings, \methodname consistently surpasses these baselines while training a single model.

\begin{table*}[t!]
\centering
\caption{\textbf{Cross-method comparison on CIFAR-100 (Full Metrics).} Rows include only methods reporting results for all three architectures. For all metrics - Top-1, Top-5, and AURC -lower is better. Bold denotes the best value per column.}
\label{tab:cifar100_full_metrics}
\scriptsize
\begin{tabular*}{\textwidth}
{@{\extracolsep{\fill}}l|lll|lll|lll@{}}
\toprule
\multirow{3}{*}{\shortstack{\textbf{Method}\\ \textbf{(best across sources)}}} &
\multicolumn{3}{c|}{\textbf{ResNet-18}} &
\multicolumn{3}{c|}{\textbf{DenseNet-121}} &
\multicolumn{3}{c}{\textbf{ResNet-101}} \\
& \textbf{Top-1} & \textbf{Top-5} & \textbf{AURC} & \textbf{Top-1} & \textbf{Top-5} & \textbf{AURC} & \textbf{Top-1} & \textbf{Top-5} & \textbf{AURC} \\
& \textbf{Err (\%)} & \textbf{Err (\%)} & \textbf{($\times10^3$)} & \textbf{Err (\%)} & \textbf{Err (\%)} & \textbf{($\times10^3$)} & \textbf{Err (\%)} & \textbf{Err (\%)} & \textbf{($\times10^3$)} \\
\midrule
Vanilla$^{(\text{TSD})}$      & 23.76\se{.07} & 6.90 & 67.65 & 20.05 & 4.99 & 52.21 & 20.75 & 5.28 & 55.45 \\
TF\text{-}KD$^{(\text{Zipf/PS-KD})}$ & 22.71\se{.15} & 6.01 & 61.77 & 19.88 & 5.10 & 69.23 & 20.10 & 5.10 & 58.80 \\
CS\text{-}KD$^{(\text{PS-KD})}$ & 21.30 & 5.70 & 56.56 & 20.47 & 6.21 & 73.37 & 20.76 & 5.62 & 64.44 \\
LS$^{(\text{PS-KD})}$         & 20.94 & 6.02 & 57.74 & 19.80 & 5.46 & 91.06 & 19.84 & 5.07 & 95.76 \\
PS\text{-}KD$^{(\text{PS-KD/TSD})}$ & 20.82\se{.23} & 5.10 & 52.10 & 18.73 & \largebold{3.90} & 45.55 & 19.43 & 4.30 & 49.01 \\
\methodname$^{(\text{ours})}$ & \largebold{19.75}\se{.05} & \largebold{4.25}\se{.03} & \largebold{50.76}\se{.02} & \largebold{18.67}\se{.08} & 3.95\se{.03} & \largebold{44.33}\se{.88} & \largebold{18.04}\se{.10} & \largebold{3.65}\se{.07} & \largebold{42.81}\se{.34} \\
\bottomrule
\end{tabular*}
\end{table*}

\subsection{Comprehensive method-wise comparison}
\label{subsec:pskd_aux}




We evaluate \methodname against state-of-the-art CIFAR-100 (clean) results, organized into two complementary views. To ensure a fair and direct comparison, all competitor metrics presented below were copied verbatim from external papers that previously aggregated these baselines. These include numbers from PS-KD \citep{kim2020pskd}, TSD and related self-distillation methods \citep{liu2024tsd}, and label-smoothing variants such as DGD \citep{zipf_table2_proxy}; no additional local re-training or re-tuning was performed on our side.

Table~\ref{tab:cifar100_full_metrics} provides a multi-architecture analysis across three standard backbones: ResNet-18, ResNet-101, and DenseNet-121, reporting \emph{Top-1 error}, \emph{Top-5 error}, and \emph{AURC} (an indicator of confidence quality \citep{geifman2018bias}). For broader context, Table~\ref{tab:resnet18_summary} ranks a comprehensive list of baselines solely by their ResNet-18 \emph{Top-1 error}. Across all metrics and architectures, \methodname demonstrates a consistent and marked advantage over existing methods.

\paragraph{Augmentation compatibility.}
\methodname retains its accuracy gains under modern augmentations such as CutMix; 
see \App~\ref{app:cutmix} for details.

\subsection{Benchmarking in Noise Regimes}
\label{subsec:dlb}


\begin{wraptable}{r}{0.55\columnwidth}
\vspace{-0.5\baselineskip} 
  \scriptsize
\centering
    \captionof{table}{\textbf{CIFAR-100 Top-1 Error.} Summary of all baseline methods ranked by Top-1 Error. }
    \label{tab:resnet18_summary}
    \begin{tabular}{ll}
      \toprule
      \textbf{Method} & \textbf{Top-1 Err (\%)} \\
      \midrule
      Vanilla$^{(\text{TSD})}$          & 23.76\se{.07} \\
      DGD$^{(\text{Zipf})}$             & 23.52\se{.13} \\
      DDGSD$^{(\text{TSD})}$            & 23.39\se{.47} \\
      BAN$^{(\text{Zipf})}$             & 23.04\se{.04} \\
      SD\text{-}Dropout$^{(\text{TSD})}$ & 23.00\se{.00} \\
      TF\text{-}KD$^{(\text{Zipf/PS-KD})}$ & 22.71\se{.15} \\
      Zipf’s LS$^{(\text{TSD})}$        & 22.62\se{.32} \\
      FRSKD$^{(\text{TSD})}$            & 22.29\se{.14} \\
      BYOT$^{(\text{TSD/Zipf})}$        & 22.12\se{.19} \\
      TSD$^{(\text{TSD})}$              & 21.46\se{.35} \\
      DKS $^{(\text{TSD})}$              & 21.36\se{.25} \\
      CS\text{-}KD$^{(\text{PS-KD})}$     & 21.30 \\
      LS$^{(\text{PS-KD})}$              & 20.94 \\
      PS\text{-}KD$^{(\text{PS-KD/TSD})}$  & 20.82\se{.23} \\
      \textbf{VISTA}$^{(\text{ours})}$     & \textbf{19.75}\se{.05} \\
      \bottomrule
    \end{tabular}
\end{wraptable}

While Section~\ref{subsec:pskd_aux} reports standard metrics on clean-label datasets, this section evaluates \methodname\ against methods tailored to noisy-label learning, in particular DLB~\citep{shen2024dlb}. Since DLB reports results only as visual accuracy gains over a baseline on custom backbones, we estimate these gains from the plots and compare them to the relative improvements achieved by \methodname\ over our baseline on matched architecture families. This gain-vs-gain comparison enables a fair evaluation across similar model scales while avoiding confounding differences in training setups.

As seen in Table~\ref{tab:dlb_comparison_gains}, when considering all noise levels, \methodname delivers accuracy gains of up to \(+\!18.9\%\) for VGG-19 over DLB, 
with a mean advantage of about \(+\!6.5\) percentage points per architecture family and a win rate of approximately \(88\%\) across all individual comparisons, thus demonstrating a clear and substantial benefit across the noise scale.

\begin{table}[ht]
\centering
\caption{Comparison to DLB: CIFAR-100 with symmetric label noise, relative accuracy gains. We compare DLB and \methodname on matching architecture families for 4 family pairs: ResNet, VGG-16, VGG-19, and DenseNet. Bold highlights the best performer within each architecture group.
}
\label{tab:dlb_comparison_gains}
\resizebox{\columnwidth}{!}{
\begin{tabular}{lllllll}
\toprule
 & \multicolumn{6}{c}{\textbf{Symmetric Noise Level}} \\
\cmidrule(lr){2-7}
\textbf{Architecture (Method)} & \textbf{10\%} & \textbf{20\%} & \textbf{30\%} & \textbf{40\%} & \textbf{50\%} & \textbf{60\%} \\
\midrule
VGG-16 (DLB)    & \textbf{3.10} & 3.30 & 3.10 & 3.00 & 4.20 & 5.80 \\
VGG-16 (\methodname)   & 2.83 \se{.15} & \textbf{4.02} \se{.27} & \textbf{6.44} \se{.49} & \textbf{12.55} \se{1.17} & \textbf{22.29} \se{.71} & \textbf{13.34} \se{2.10} \\
\midrule
VGG-19 (DLB)    & \textbf{3.20} & 3.60 & 4.05 & 5.70 & 5.80 & \textbf{5.50} \\
VGG-19 (\methodname)   & 2.18 \se{.43} & \textbf{4.52} \se{.70} & \textbf{8.90} \se{1.33} & \textbf{16.59} \se{.96} & \textbf{24.71} \se{2.73} & 3.18 \se{1.22} \\
\midrule
ResNet32 (DLB)  & 2.50 & 2.80 & 2.60 & 2.30 & 3.70 & 2.40 \\
ResNet18 (\methodname) & \textbf{6.16} \se{.41} & \textbf{9.11} \se{.11} & \textbf{10.84} \se{.30} & \textbf{10.38} \se{.39} & \textbf{10.57} \se{.28} & \textbf{8.50} \se{.55} \\
\midrule
DenseNet40 (DLB)   & 1.80 & 1.40 & 1.00 & 1.40 & 1.50 & 3.65 \\
DenseNet121 (\methodname) & \textbf{5.87} \se{.56} & \textbf{7.08} \se{.51} & \textbf{7.31} \se{.13} & \textbf{10.43} \se{.37} & \textbf{10.01} \se{.14} & \textbf{8.82} \se{.47} \\
\midrule
WRN (DLB)       & 1.40 & 2.10 & 2.00 & 1.50 & 1.90 & 3.70 \\
ResNet110 (DLB) & 3.10 & 3.10 & 3.90 & 2.70 & 4.25 & 5.30 \\
\bottomrule
\end{tabular}
}
\end{table}



\subsection{Trajectory and Structural Consistency Analysis}

Having established that our method generalizes well across diverse settings, we next examine its training dynamics through the lens of optimization stability. Figure~\ref{fig:forget_ce_vs_cafe}b shows that \methodname substantially improves structural consistency, as evidenced by a pronounced reduction in the average $\mathrm{Deviation\ Gap}$ from 10.60\% to 2.97\%. By anchoring the model to high-fidelity latent regions learned earlier along the training trajectory, the characteristic double-ascent behavior, which is often associated with localized optimization instability, is eliminated.

\paragraph{Expert Anchor Lifespan}
Figure~\ref{fig:life_expectancy_heatmap} visualizes the duration for which each past model state remains uniquely useful as an expert anchor during training. Read each vertical bar as the \emph{lifespan} of a checkpoint: its height counts how many future epochs still credit that specific state with nonzero \emph{marginal validation coverage}. For example, if a bar of height 30 sits above epoch $x{=}48$, it indicates that the model at epoch 48 retains unique credit until epoch 78. At that point, later models fully inherit its coverage and its marginal contribution falls to zero. Color intensity encodes the magnitude of this marginal coverage; note that typically shading along the bar fades with height, which  reflects the intuitive result that an expert anchor's uniqueness is expected to decay as the optimization trajectory progresses.

\paragraph{Forward Knowledge Transfer}
When comparing the top panel of Figure~\ref{fig:life_expectancy_heatmap} representing Vanilla training and the bottom panel representing \methodname, the contrast in optimization stability is stark. Vanilla training shows many very tall, slow-fading bars, which indicates that later checkpoints repeatedly fail to inherit the unique mastery of their predecessors. This lack of inheritance leads to a diminished \textbf{structural residual}, as the model fails to retain specific samples it once correctly identified. In contrast, the bottom panel for \methodname shows much shorter, fast-fading bars, indicating a rapid handoff of useful patterns along the trajectory. 

\begin{figure}[h!]
\centering
\includegraphics[width=\linewidth]{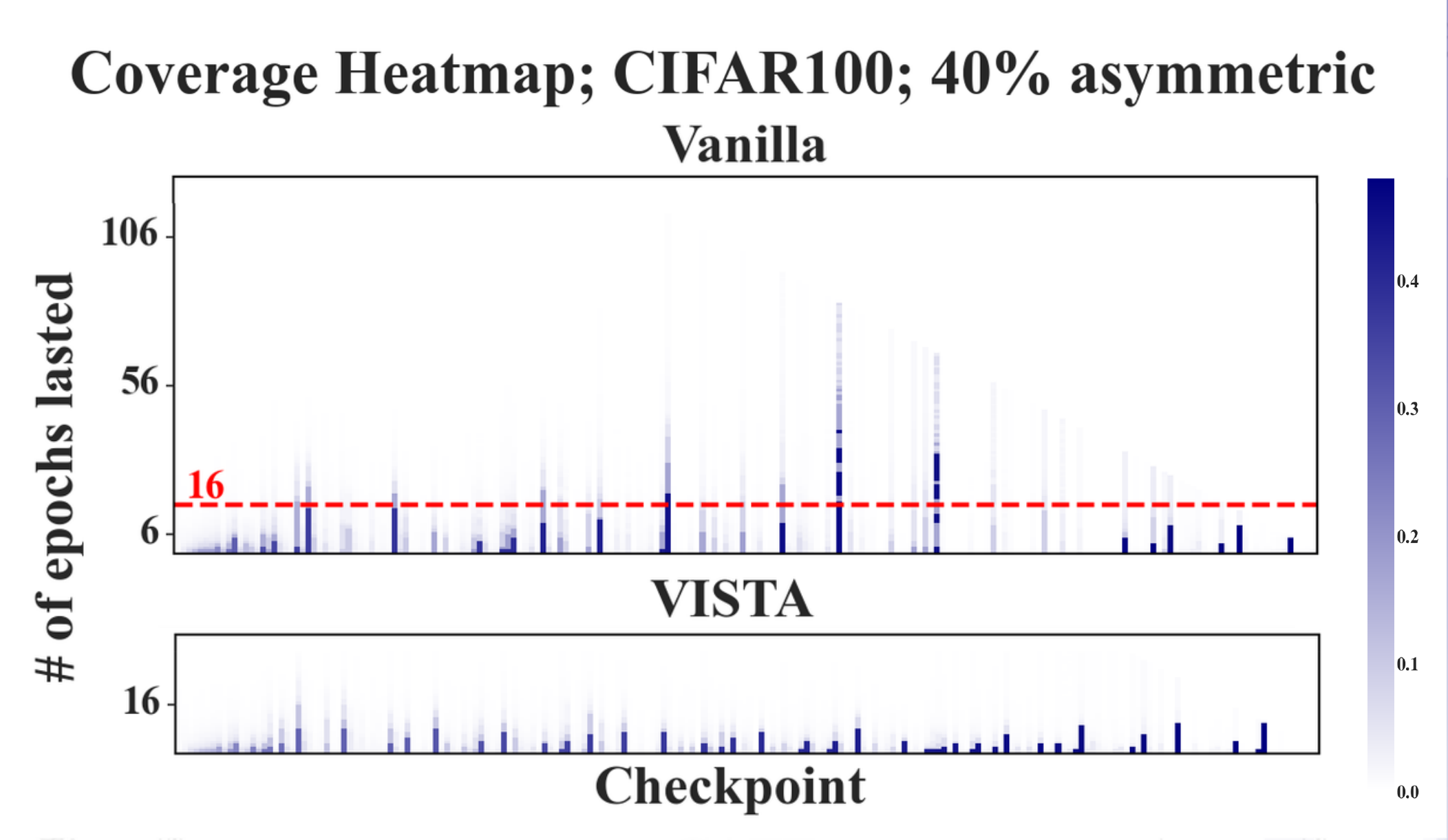}

\caption{Checkpoint life expectancy as a visualization of trajectory consistency. Each vertical bar represents the duration for which a specific model state remains uniquely useful as an expert anchor. Vanilla training exhibits long-lasting uniqueness that later states fail to absorb, while \methodname shows much shorter lifespans, which indicates a rapid and effective transfer of structural information to subsequent checkpoints.}
\label{fig:life_expectancy_heatmap}
\end{figure}

\paragraph{End-of-training (EoT) marginal coverage}
A complementary view is shown in Figure~\ref{fig:cov_at_eot}, which plots, for each epoch, its \emph{marginal coverage measured at the end of training}. The yellow curve (Vanilla) reveals several large spikes originating from mid-training checkpoints: even at the end of training, substantial portions of the validation examples remain \emph{uniquely} attributed to those earlier epochs, indicating that later models never fully inherit their coverage. In contrast, the purple curve (\methodname) concentrates almost all its mass near the final epochs, indicating that knowledge is systematically rolled forward into the final model. 

\begin{figure}[h!]
\centering
\includegraphics[width=\linewidth]{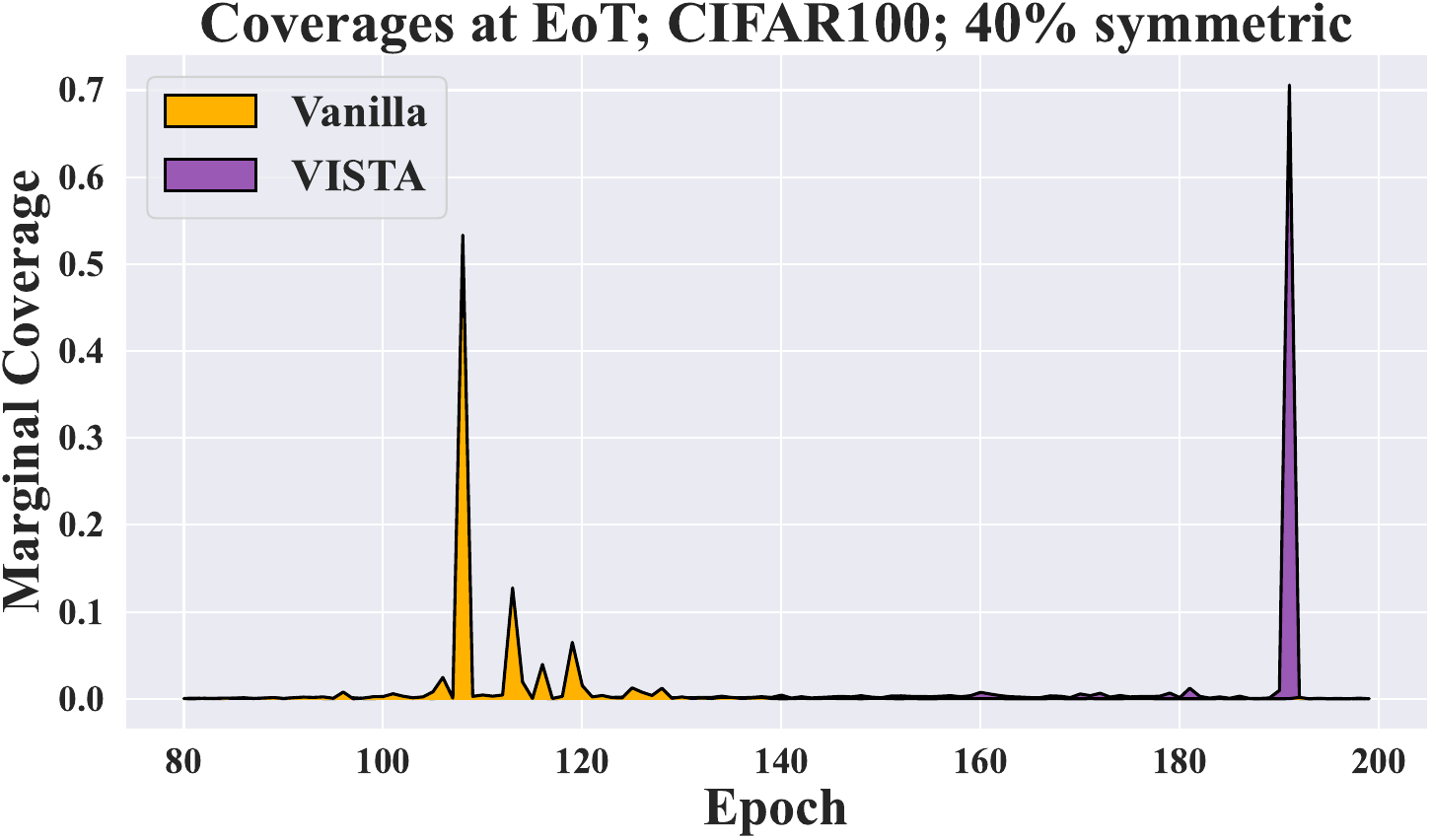}
\caption{Marginal coverage at the end of training for CIFAR-100 with 40\% symmetric noise. The yellow curve for Vanilla reveals that substantial validation coverage remains uniquely attributed to mid-training epochs. In contrast, the purple curve for \methodname concentrates mass near the final epochs, which indicates that knowledge is systematically rolled forward and embedded into the final model. 
}
\label{fig:cov_at_eot}
\end{figure}

To address the possibility that the strongest performance occurs mid-training, the quantitative comparisons reported in Section~\ref{subsec:main_results} evaluate \emph{Vanilla} under an oracle early-stopping protocol. This mechanism is standard in corrupted label training to prevent models from learning noisy patterns toward the end of the optimization path, which explains why a mid-training accuracy spike is often expected. This oracle selects the epoch with the highest validation accuracy, which corresponds precisely to the largest end of training spike in Figure~\ref{fig:cov_at_eot}. Despite this favorable baseline, \methodname achieves higher test accuracy and a significantly larger structural residual by explicitly rolling that knowledge forward to later checkpoints. This results in better generalization than freezing the model at its best mid-training state.


\subsection{Empirical Storage Efficiency}
\label{sec:storage_empirical}

The light version of \methodname, introduced in Section~\ref{subsec:light-cafe}, is designed to be substantially more efficient than the basic variant described in Section~\ref{cafe-basic}, as will be confirmed by the complexity analysis in Section~\ref{sec:complex-asys}. To assess the trade-off in practice, we compare the two methods empirically. 

Table~\ref{tab:fat_vs_thresh} shows that the accuracy of the two methods remains 
virtually identical when the tolerance parameter of Light \methodname is set to 
\(\tau=0.01\). Further evidence is provided in Figure~\ref{fig:n_ckpts_per_epoch},
which tracks the size of the contributing set \(\mathcal{A}_{t}\) 
(see \App~\ref{app:light-cafe-alg}, Alg.~\ref{alg:fat_coverage_distill-generalized}) throughout training.  
The largest value observed along each curve essentially represents the 
amount of checkpoint storage required to execute the entire run end-to-end. With \(\tau=0.01\), these maxima drop from 173–176 checkpoints down to just 7–12 
across the three different noise settings.
Despite these drastic reductions, accuracy remains essentially unchanged
(Table~\ref{tab:fat_vs_thresh}), confirming that \methodname's advantage stems 
from the quality of the historical signals that are being retained, not from maintaining 
a large archive.

\begin{figure}[h]
    \centering
    \includegraphics[width=1.0\columnwidth]{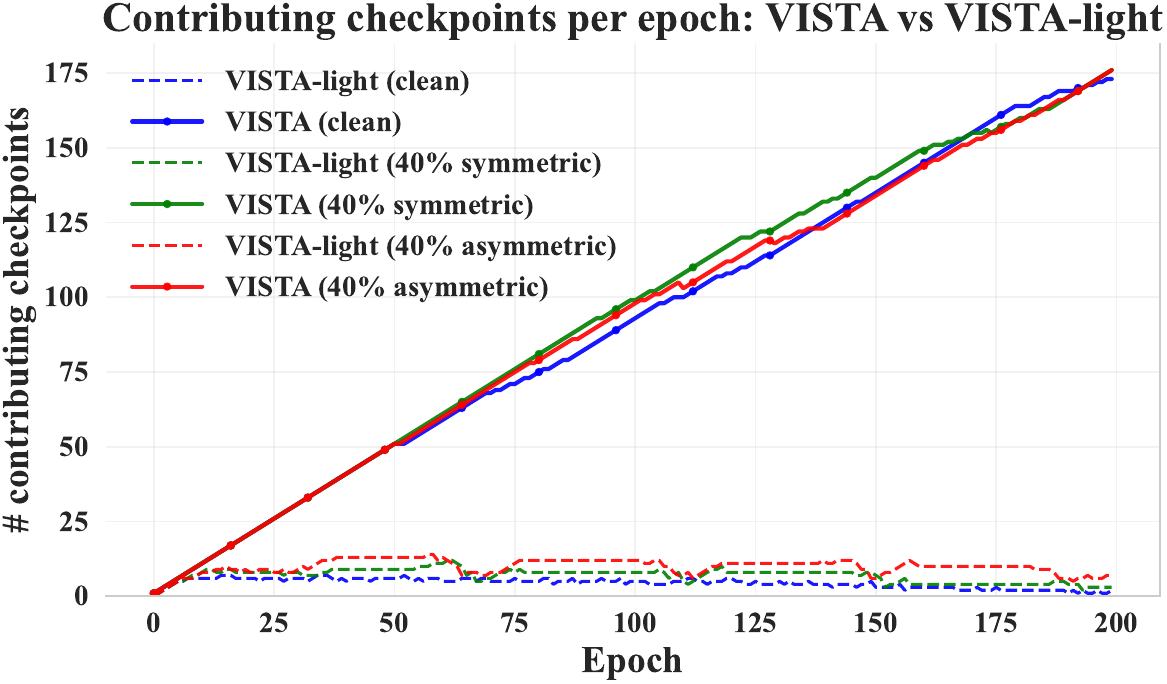}
    \vspace{-15pt}
    \caption{Pruning of the contributing set $\mathcal{A}_t$ on CIFAR-100. \methodname-light ($\tau{=}0.01$, dashed) maintains a near-constant storage footprint compared to the linear growth of Basic \methodname (solid).}
    \label{fig:n_ckpts_per_epoch}
    \vspace{-10pt}
\end{figure}

\begin{table*}[t]
\centering
\scriptsize
\caption{\textbf{Accuracy and maximum contributing checkpoints for Basic and Light \methodname.}
Tolerance of \(\tau{=}0.01\) prunes low-coverage checkpoints, slashing storage needs while preserving accuracy.}
\label{tab:fat_vs_thresh}
\begin{tabular}{l|cc|cc|cc}
\toprule
\multirow{2}{*}{\textbf{Method}} 
 & \multicolumn{2}{c|}{\textbf{Clean}} 
 & \multicolumn{2}{c|}{\textbf{40\% Symmetric}} 
 & \multicolumn{2}{c}{\textbf{40\% Asymmetric}} \\
 & Acc. (\%) & Required ckpt space (\#)
 & Acc. (\%) & Required ckpt space (\#)
 & Acc. (\%) & Required ckpt space (\#) \\
\midrule
\emph{\methodname} & 80.25 \se{.05} & 173 & 69.75 \se{.14} & 176 & 67.17 \se{.10} & 176 \\
\emph{Light \methodname} (\(\tau{=}0.01\)) & 80.08 \se{.15} & 7 & 69.55 \se{.15} & 12 & 66.88 \se{.21} & 12 \\
\bottomrule
\end{tabular}
\end{table*}

\subsection{Ablation summary}

Due to space constraints, the full ablation studies and their empirical evidence 
have been moved to \App~\ref{app:ablation}. Below we provide only the 
headline findings from all ablations:

\begin{itemize}[nosep, leftmargin=*]

  \item \textbf{EMA variants.} \emph{Parameter-space} EMA variants provide no extra benefit, coverage in \emph{prediction space} matters most (see \App~\ref{subsec:cafe_ema_ablation}).
  
  \item \textbf{Coverage vs. raw accuracy.} Simply relying on \emph{raw validation accuracy} to guide training is insufficient; the key contributor to performance is constructing teacher weights based on \emph{marginal coverage} (see \App~\ref{subsec:ablate_accuracy_only}).
  
  \item \textbf{Checkpoint order: accuracy vs. chronology.} Sweeping through past checkpoints in reverse chronological order (instead of sorting by validation accuracy) yields worse results, underscoring the importance of using validation accuracy to rank historical models (see \App~\ref{subsec:ablate_chronological}).

   \item \textbf{Temporal smoothing.} Smoothing across adjacent checkpoints before weighting offers no gain; validation-guided coverage already ensures stable soft labels (see \App~\ref{subsec:ablate_smoothing3}).
  

      \item \textbf{Blending schedule (\texorpdfstring{$\beta$}{beta}).} 
  Fixed mixtures (\(\beta = 0\) or \(\beta = 1\)) severely under-perform, showing that neither ignoring the teacher nor relying on it exclusively is effective.  
  In contrast, any smoothly increasing schedule (linear, cosine, exponential with different \(k\)) yields nearly identical accuracy, indicating that the precise functional form is not important (see \App~\ref{appendix:beta_schedule_ablation}).

\end{itemize}


\subsection{Discussion of Empirical Findings} 
These results collectively demonstrate that \methodname successfully transforms lingering uniqueness into transferred knowledge. By treating high-fidelity states as expert anchors, the framework ensures that earlier mastery is systematically absorbed into the final converged model, thereby maximizing the \textbf{structural residual} and minimizing the \textbf{deviation gap}. This targeted preservation of knowledge explains the measurable superiority of \methodname across diverse benchmarks, providing better accuracy while maintaining single-model inference efficiency.

While we evaluate \methodname on standard architectures , Trajectory Deviation also occurs in heavier models like Vision Transformers. The Knowledge Fusion (KF) framework proved both the existence of this phenomenon in such models and the efficacy of validation-driven signals in mitigating it. Since \methodname harnesses this same concept, transitioning from a post-hoc ensemble to an online, single-model approach , we expect results to generalize to these architectures, a path hindered here by resource constraints. However, our evaluation across diverse datasets and noise regimes provides robust evidence for these claims.

\section{Theoretical \& complexity analysis}
\label{sec:asys}

\paragraph{Complexity} \label{sec:complex-asys}
The computational cost of Basic \methodname is 
$\mathcal{O}(n \log n + n m)$ in time and $\mathcal{O}(n)$ in space, 
where $n=|S_t|$ is the number of stored checkpoints and $m$ is the validation set size.  
For Light \methodname, the cost depends on the size of the contributing set 
$\mathcal{A}_t \subseteq S_t$, which shrinks substantially for even small 
values of $\tau$. A full step-by-step complexity derivation is provided 
in \App~\ref{app:complexity}.

\paragraph{Theoretical Guarantee} We prove that once a checkpoint’s marginal coverage becomes zero at some epoch, 
it can never regain positive marginal coverage at later epochs.  
Consequently, Light \methodname with $\tau=0$ is provably equivalent to Basic \methodname while achieving a significant reduction in storage. Larger reduction is obtained when $\tau>0$. The formal claim and complete proof are given in \App~\ref{app:proof_tau_equivalence}.

\section{Summary and discussion}

We presented \methodname, an online, validation-aware self-distillation approach that mitigates trajectory deviation in deep neural networks. By weighting historical predictions by their unique contribution to the structural residual, \methodname improves both robustness and generalization, while preserving single-model inference efficiency. In contrast to existing methods, it explicitly selects which past predictions to trust using validation coverage. A promising direction for future work is to identify patterns and examples most susceptible to trajectory deviation and to develop preventative strategies, rather than to focus exclusively on recovering mastered knowledge after it has been lost.

We additionally introduced Light \methodname, which provides a 'knob' to control the trade-off between complexity and performance; remarkably, even the most lenient setting reduces resource overhead by approximately 90\% while maintaining the same generalization and consistency gains.

Our findings support two claims: (1) 
Trajectory deviation is a pervasive and 
functionally significant effect in deep learning, representing a structural optimization failure that occurs even in the absence of label noise.
(2) Targeted mitigation via validation-aware weighting of historical knowledge yields measurable improvements in robustness and accuracy. 


\subsection*{\textbf{Acknowledgments}}
This work was supported by a grant from the Gatsby Charitable Foundation and AFOSR award FA8655-24-1-7006.

\bibliography{vista}
\bibliographystyle{icml2026}

\newpage
\appendix
\onecolumn
\section{Light \methodname pseudocode}
\label{app:light-cafe-alg}

For completeness, we provide the full Light \methodname\ procedure introduced in Section~\ref{subsec:light-cafe}.

\begin{algorithm}[H]
\caption{Light \methodname: epoch $t$} 
\label{alg:fat_coverage_distill-generalized}

\textbf{Input}:  Validation set $\mathcal{V}$; contributing set $\mathcal{A}_{t-1}$; coverage subsets $\{C_s\}_{s\in\mathcal{A}_{t-1}\cup\{t\}}$; validation accuracies $\{a_s\}_{s\in\mathcal{A}_{t-1}\cup\{t\}}$; checkpoint predictions $\{p_s(x)\}_{s\in\mathcal{A}_{t-1}\cup\{t\}}$; scheduler value $\beta_t \in [0,1]$; tolerance parameter $\tau\geq 0$.

\textbf{Output}: blended target for next epoch optimization.

\begin{algorithmic}[1]
\State \textbf{Order checkpoints:} $\mathcal{K} \gets \{\mathcal{A}_{t-1}\cup\{t\}\}$ sorted by $a_s,~s\in \mathcal{A}_{t-1}\cup\{t\}$ (high $\to$ low) 
\label{light-step2}
\State \textbf{Marginal coverage sweep:} $U \gets \emptyset$; \textbf{for} $s \in \mathcal{K}$ \textbf{do} $\Delta_s \gets |C_s \setminus U|$; \; $U \gets U \cup C_s$ 
\State \textbf{Update contributing set:}  $\mathcal{A}_t \gets \{s\in \mathcal{K}, \Delta_s\ > \tau\}$
\State \textbf{Normalize coverage gains:} $\widehat{\Delta}_s \gets \Delta_s \Big/ \sum_{j \in \mathcal{A}_t} \Delta_j$ 
\label{light-step5}
\State \textbf{Construct coverage-weighted teacher:} $Q_t(x) \gets \sum_{s \in \mathcal{A}_t} \widehat{\Delta}_s \, p_s(x)$
\label{light-step6}
\State \textbf{Form blended target for next epoch:} $\tilde y_{t+1}(x) \gets (1-\beta_t)\,\text{one-hot}(y) + \beta_t\,Q_t(x)$ 
\State \textbf{Return}  $\tilde y_{t+1}(x)$
\end{algorithmic}
\end{algorithm}

\section{Compatibility with data augmentation}
\label{app:cutmix}
To verify compatibility with common augmentations, we tested \methodname using 
CutMix, a widely adopted augmentation for image classification, with a ResNet-18 
backbone. \methodname maintains its accuracy gains under CutMix, confirming that 
its coverage-aware distillation integrates smoothly with representative, state-of-the-art augmentation strategies (see Table~\ref{tab:cutmix_compact}).

\begin{table}[ht]
\centering
\scriptsize
\caption{CutMix. 
Mean test accuracy with and without CutMix augmentations on CIFAR100.
}
\label{tab:cutmix_compact}
\begin{tabular}{l|c|c|c|c|c|c}
\toprule
\multirow{2}{*}{Method} & \multicolumn{3}{c|}{CIFAR-100 Symmetric} & \multicolumn{2}{c|}{CIFAR-100 Asym.} & TinyImageNet \\
 & Clean & 20\% & 40\% & 20\% & 40\% & Clean \\
\midrule
Vanilla + Early Stopping        & 78.66\se{.11} & 64.98\se{.11} & 59.37\se{.36} & 66.61\se{.12} & 49.57\se{.10} & 65.37\se{.01} \\
\hspace{1em} + CutMix     & \textbf{80.75}\se{.16} & 73.60\se{.15} & 67.15\se{.46} & 75.10\se{.21} & 62.81\se{.31} & 68.45\se{.19} \\
\midrule
\methodname          & 80.25\se{.05} & 74.09\se{.01} & \textbf{69.75}\se{.14} & 75.61\se{.18} & 67.17\se{.10} & 68.91\se{.01} \\
\hspace{1em} + CutMix       & 
\textbf{80.74}\se{.09} & \textbf{74.45} \se{.19} & 69.19\se{.30} & \textbf{75.95}\se{.12} & \textbf{71.47}\se{.16} & \textbf{70.62}\se{.04} \\
\bottomrule
\end{tabular}
\end{table}


\section{Ablation study: full results}
\label{app:ablation}

\subsection{\methodname-inspired EMA} 
\label{subsec:cafe_ema_ablation}
Classical Exponential Moving Average (EMA) smooths parameter updates by maintaining a running average of the model weights across epochs. In order to align this idea with our coverage notion, we devised a variant of EMA that takes advantage of the scores used by \methodname, replacing mere temporal averaging by reweighting past checkpoints according to their \emph{marginal validation coverage}. Specifically, at the end of each epoch we compute the incremental validation coverage contributed by each earlier checkpoint, normalize these increments into weights, and construct a teacher model in parameter space by averaging the corresponding checkpoints. The current model is then blended towards this teacher, yielding the method termed \methodname-EMA. 

Table~\ref{tab:cafe_ema_ablation} summarizes these results for CIFAR-100 (clean). Both Vanilla EMA and \methodname-EMA ($2^{nd}$ and $4^{th}$ rows) use the same Cross Entropy loss as in Vanilla ($1^{st}$ row); the only difference lies in the dynamic adjustment of network weights via EMA. Vanilla EMA (decay = 0.999) performs slightly worse than Vanilla ($78.32 \pm 0.14$ vs. $78.66 \pm 0.11$), and the \methodname-EMA variant underperforms further at $76.60 \pm 0.23$. In contrast, Basic \methodname achieves $80.25 \pm 0.05$. This suggests that while marginal coverage weighting is effective for soft label formation in self-distillation, applying it directly in parameter-space (i.e., averaging and blending the network weights themselves across checkpoints, rather than averaging predictions or soft labels) does not yield the same benefits, and may even hinder optimization. This confirms that validation-coverage weighting is most effective when applied in the \emph{prediction space}, rather than in the parameter space.

\begin{table*}[t]
\centering
\begin{minipage}{0.44\textwidth}
\centering
\vspace{-0.23cm}
\scriptsize
\caption{\methodname-inspired EMA ablation on CIFAR-100 (clean).}
\label{tab:cafe_ema_ablation}
\begin{tabular}{l|l}
\toprule
Method & CIFAR-100 \\
\midrule
Vanilla + Early Stopping (ES) & $78.66 \semth{.11}$ \\
Vanilla EMA + ES \tiny{(decay = 0.999)} & $78.32 \semth{.14}$ \\
\midrule
\methodname & $\mathbf{80.25} \semth{.05}$ \\
\methodname-EMA & $76.60 \semth{.23}$ \\
\bottomrule
\end{tabular}
\end{minipage}
\hspace{0.02\textwidth}
\begin{minipage}{0.52\textwidth}
\centering
\scriptsize
\caption{Joint ablation results on CIFAR-100 with 40\% label noise. 
}
\label{tab:cafe_ablate_noise}
\begin{tabular}{l|cc}
\toprule
Method & Sym 40\% & Asym 40\% \\
\midrule
Vanilla + Early Stopping & $59.37 \semth{.36}$ & $49.57 \semth{.10}$ \\
\midrule
Accuracy-only weighting (\ref{subsec:ablate_accuracy_only}) & $67.73 \semth{.16}$ & $62.37 \semth{.34}$ \\
Chronological coverage (\ref{subsec:ablate_chronological}) & $\mathbf{69.49} \semth{.31}$ & $64.18 \semth{.32}$ \\
Temporal smoothing (\ref{subsec:ablate_smoothing3}) & $69.37 \semth{.18}$ & $64.50 \semth{1.30}$ \\
\methodname & $\mathbf{69.75} \semth{.14}$ & $\mathbf{67.17} \semth{.10}$ \\
\bottomrule
\end{tabular}
\end{minipage}
\end{table*}

\subsection{Coverage vs. accuracy-only averaging}
\label{subsec:ablate_accuracy_only}

We now consider a variant that \emph{ignores coverage entirely} and uses the \emph{normalized validation accuracy} of each checkpoint as a direct weight in the teacher, without considering marginal coverage. The key question is whether \methodname's advantage arises from its \emph{coverage-aware credit} or simply from \emph{favoring high-accuracy checkpoints}. Results are reported in Table~\ref{tab:cafe_ablate_noise}, showing that \methodname surpasses the accuracy-only variant under both types of noise, further indicating that \textbf{coverage-aware credit assignment is crucial}.

\subsection{Chronological coverage vs. accuracy-sorted marginal coverage}
\label{subsec:ablate_chronological}

We investigate an alternative choice for \methodname where we retain the coverage notion 
but replace Step~\ref{step1} in Algorithm~\ref{alg:fat_coverage_distill} with a simple chronological ordering: 
checkpoints are sorted by time so that later checkpoints come first when assigning credit.
This contrasts with \methodname, which sorts checkpoints by validation accuracy and uses marginal coverage in that order. Results are reported in Table~\ref{tab:cafe_ablate_noise}, showing that \methodname is superior or on par for both noise  settings, 
validating our decision to base marginal coverage on accuracy-sorted checkpoints rather than chronological order.

\subsection{Temporal smoothing of soft labels}
\label{subsec:ablate_smoothing3}

The current ablation applies a smoothing window (of size 3) across adjacent checkpoints before forming the teacher, aiming to reduce noisy per-epoch fluctuations. The question is whether \methodname's validation-guided ordering and marginal coverage already provide sufficient stability.  Results are reported in Table~\ref{tab:cafe_ablate_noise}, showing that \methodname remains superior across both noise types, suggesting that validation-guided marginal coverage already yields robust soft labels, with no extra benefit from local temporal smoothing.

\subsection{Effect of the \texorpdfstring{$\beta$}{beta} schedule}
\label{appendix:beta_schedule_ablation}

We now study the effect of the $\beta$ schedule used to interpolate between hard labels and the teacher distribution in \methodname. Recall that in the main method we define a normalized epoch index
\begin{equation}
    u_t \;=\; \frac{t-1}{E-1} \in [0,1],
\end{equation}
and use an exponential schedule of the form
\begin{equation}
    \beta_t^{(k)} \;=\; \frac{1 - \exp\!\bigl(-k\,u_t\bigr)}{1 - \exp(-k)},
\end{equation}
where the default choice in \methodname is \(k = 2\).  
In this ablation we vary \(k\) in this exponential family (while keeping all other hyperparameters fixed), and also compare against simple baselines: constant schedules \(\beta_t \equiv c\) with \(c \in \{0, 0.5, 1\}\), a linear schedule \(\beta_t = u_t\), and a cosine schedule \(\beta_t = \tfrac{1 - \cos(\pi u_t)}{2}\).

Table~\ref{tab:beta_schedule_ablation} reports test accuracy on CIFAR-100 with 40\% symmetric noise. 
Constant schedules provide useful reference points: 
$\text{fix-0}$ corresponds to relying entirely on hard labels with no teacher signal, 
whereas $\text{fix-1}$ corresponds to relying solely on the teacher distribution. 
Both extremes perform substantially worse than any gradually increasing schedule. 
In contrast, smooth schedules such as linear, cosine, or exponential (\(k = 1, 2, 3\)) all yield similar accuracies, 
showing that the important design choice is to let $\beta_t$ increase over training, 
and that \methodname is robust to the exact functional form of this increase.

\begin{table}[H]
    \centering
    \caption{Effect of the $\beta$ schedule on CIFAR-100 with 40\% symmetric noise. We report mean test accuracy (\%) and standard error. Smooth increasing schedules (linear, cosine, exponential with different $k$ values) behave similarly, while constant schedules perform substantially worse.}
    \label{tab:beta_schedule_ablation}
    \begin{tabular}{lc}
    \toprule
    Schedule & Test accuracy (\%) \\
    \midrule
    fix-0           & \(59.39\,\semth{.41}\) \\
    fix-0.5         & \(61.68\,\semth{.38}\) \\
    fix-1           & \(0.94\,\semth{.06}\) \\
    \hline
    linear          & \(69.26\,\semth{.08}\) \\
    cosine          & \(69.36\,\semth{.02}\) \\
    exp; \(k=1\)    & \(69.23\,\semth{.15}\) \\
    exp; \(k=2\) \text{(default)} & \(69.75\,\semth{.14}\) \\
    exp; \(k=3\)    & \(69.53\,\semth{.59}\) \\
    \bottomrule
    \end{tabular}
\end{table}

\section{Theoretical analysis}
\label{app:theo_and_comlex_asys}

\subsection{Complexity Derivation}
\label{app:complexity}
To analyze the complexity of Alg.~\ref{alg:fat_coverage_distill}, let $n = |S_t|$ and $m$ the size of the validation set. 

\textbf{Time complexity:} 
\begin{enumerate}[label=\roman*) ,leftmargin=0.8cm,topsep=0pt]
\setlength\itemsep{0.1em}
  \item Step~\ref{step1} is dominated by $\mathcal{O}( n \log n )$.
  \item Step~\ref{step2} is dominated by $\mathcal{O}( n m )$.
  \item Steps~\ref{step3}-\ref{step4} are dominated by $\mathcal{O}( n )$.
\end{enumerate}
\emph{Overall time complexity} is 
$\mathcal{O}\bigl(n \log n + n m\bigr).$

\textbf{Space complexity:} 
Storing predictions $\{p_s(x)\}$ for $n$ checkpoints requires 
$\mathcal{O}(n ).$

Similarly, the time and space complexity of Alg.~\ref{alg:fat_coverage_distill-generalized} in the appendix is dominated by the size of the set of \emph{contributing checkpoints} $\mathcal{A}_t \subseteq S_t$, where $\lvert \mathcal{A}_t \rvert\leq \lvert S_t\rvert$. The larger the tolerance parameter $\tau$ is, the larger the decrease in complexity achieved by Light \methodname as compared to Basic \methodname. Our empirical study shows that there is already a large gain even for $\tau=0.01$.

\subsection{Marginal Coverage Claim: Formal Statement and Proof}
\label{app:proof_tau_equivalence}

For any checkpoint $s$, if there exists an epoch $t$ at which its marginal coverage is null, i.e. $\Delta_s=0$, then $\Delta_s=0$ for all subsequent epochs. Consequently, the space complexity is bounded by the size of the set of contributing checkpoints for which $\Delta_s>0~\forall t$ -- new checkpoints may be added to this set, but once a  checkpoint is discarded, it cannot reenter.

This is stated more formally in the next claim:

\begin{claim}
\label{claim:1}
Let $S_t=\{s_1,\ldots,s_t\}$ denote the ordered set of checkpoints up to epoch $t$, ranked by the size of their coverage subset $C_{s_i}$ - the set of points that checkpoint $s_i$ correctly classifies in the validation set. Using (\ref{eq:weights}), define the marginal coverage of checkpoint $s_i$ at epoch $t$
\begin{equation*}
\Delta^t_{s_i} \;=\; \bigl|\, C_{s_i} \setminus \bigcup_{j=1}^{i-1} C_{s_j} \,\bigr|.
\end{equation*}
Let $\mathcal{B}_t \subseteq S_t$ denote the support set of checkpoints at epoch $t$, defined as  
\begin{equation*}
\mathcal{B}_t \;=\; \{\, s_i \in S_t \;\mid\; \Delta^t_{s_i}  > 0 \,\}.
\end{equation*}

Given an epoch $t$ and some future epoch $T>t$, $~\mathcal{B}_T\cap \{S_t\setminus \mathcal{B}_t\} = \emptyset$. 
\end{claim}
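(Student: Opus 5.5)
The plan is a monotonicity argument. The marginal coverage of a fixed checkpoint $s$ is the size of $C_s$ minus a union of coverage sets of checkpoints ranked ahead of it, and this union can only grow as $t$ increases. I would first record two facts about the quantities in (\ref{eq:weights}). First, for a fixed checkpoint $s$ the coverage subset $C_s\subseteq\mathcal{V}$ is determined once epoch $s$ ends: the frozen model $p_s$ and the fixed validation set $\mathcal{V}$ determine it. So $C_s$, and hence its rank key $|C_s|$ (equivalently $a_s$), does not depend on the current epoch $t$. Second, writing $P_t(s)=\{s'\in S_t : s' \text{ precedes } s \text{ in the ordering at epoch } t\}$ and $U_t(s)=\bigcup_{s'\in P_t(s)} C_{s'}$, the definition reads $\Delta^t_s = |C_s\setminus U_t(s)|$.

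The core step is to show $P_t(s)\subseteq P_T(s)$ for every $T>t$ and every $s\in S_t$. The set $S_T=S_t\cup\{t+1,\dots,T\}$ is obtained from $S_t$ by inserting new checkpoints only. The rank keys of the old checkpoints are unchanged, so sorting by $|C_s|$ keeps the relative order of any two old checkpoints. Hence every $s'$ that preceded $s$ at epoch $t$ still precedes $s$ at epoch $T$. The newly inserted checkpoints may land before or after $s$, but either way they can only enlarge $P_T(s)$. It follows that $U_t(s)\subseteq U_T(s)$, and therefore
\begin{equation*}
\Delta^T_s = |C_s\setminus U_T(s)| \le |C_s\setminus U_t(s)| = \Delta^t_s .
\end{equation*}
Now take $s\in S_t\setminus\mathcal{B}_t$. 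Then $\Delta^t_s=0$, so $\Delta^T_s=0$, so $s\notin\mathcal{B}_T$, which is exactly $\mathcal{B}_T\cap(S_t\setminus\mathcal{B}_t)=\emptyset$. I would also note the slightly stronger statement that falls out: $\Delta_s^t$ is non-increasing in $t$. This is what makes Light \methodname{} with $\tau=0$ coincide with Basic \methodname{}, since any checkpoint pruned at epoch $t$ contributes zero weight at every later epoch. Moreover, the remaining checkpoints' $\Delta$'s are unaffected by the pruning, because a checkpoint with $C_s\subseteq U$ adds nothing to the running union.

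The step I expect to be the main obstacle is ties in the ranking. If $|C_{s'}|=|C_s|$ and ties were broken arbitrarily, and possibly differently, at different epochs, the relative order of two old checkpoints could flip. Then $P_t(s)\subseteq P_T(s)$ could fail, and a zero-coverage checkpoint could regain positive coverage. I would handle this by fixing a deterministic, epoch-independent tie-breaking rule, for example ordering tied checkpoints by their index $s$. Under such a rule, the order restricted to $S_t$ is the same total order at every epoch, and the inclusion argument goes through verbatim. I would state this convention explicitly as part of the ordering in Step~\ref{step1}, since the claim is genuinely false without some consistent tie-breaking.
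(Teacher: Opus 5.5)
Your proof is correct and rests on the same observation as the paper's: inserting new checkpoints never removes a predecessor of $s$ in the ranking, so the union that must cover $C_s$ can only grow. The paper argues this by contradiction for a single step $T-1\to T$, splitting into the cases where the new checkpoint ranks after or before $s$, which is exactly your ``before or after'' remark; your direct treatment of arbitrary $T>t$ and your explicit epoch-independent tie-breaking rule make precise two points the paper leaves implicit, namely the induction over epochs and the assumption, in its Case~1 and in its statement that the predecessors of $s$ in $S_T$ are $\tilde S\cup\{T\}$, that inserting $T$ does not reshuffle the old checkpoints.
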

\begin{proof}
By contradiction. Suppose there is a checkpoint $s$ and epoch $T$ such that $s\in S_{T-1}\subset S_T$. Assume that $s\in \{S_{T-1}\setminus \mathcal{B}_{T-1}\}\implies s\notin\mathcal{B}_{T-1}$ and $s\in\mathcal{B}_T$. Consider 2 cases:
\begin{enumerate}

\item $\lvert C_{s}\rvert\geq \lvert C_{T}\rvert$: By construction checkpoint $T$ succeeds checkpoint $s$ in both ordered sets $S_{T-1}$ and $S_T$. Therefore the marginal coverage $\Delta_s$ in both epochs is the same, and $s\notin\mathcal{B}_{T-1} \implies s\notin\mathcal{B}_T$, in contradiction to the assumption. 

\item $\lvert C_{s}\rvert<\lvert C_{T}\rvert$: Let $\tilde S$ denote the set of checkpoints that precede $s$ in ordered set $S_{T-1}$. It follows that the set of checkpoints that precede $s$ in ordered set $S_{T}$ is $\tilde S\cup \{T\}$. 
\begin{eqnarray*}
s\notin\mathcal{B}_{T-1} &~~\implies~~ \Delta^{T-1}_{s} =0 &~~\implies~~ C_{s} \subseteq \bigcup_{q\in\tilde S} C_{q} \subseteq \bigcup_{q\in\tilde S\cup \{T\}} C_{q}  \\
s\in\mathcal{B}_{T} &~~\implies~~ \Delta^T_{s} \neq 0 &~~\implies~~ C_{s} \not\subseteq \bigcup_{q\in\tilde S\cup \{T\}} C_{q} 
\end{eqnarray*}
which is a contradiction.

\end{enumerate}
\end{proof}

\section{Validation accuracy as a surrogate for generalization}
\label{app:val_as_surrogate}

We next justify using validation accuracy as the weighting signal in \methodname.  
Figure~\ref{fig:acc_over_time_test_vs_val_allinone} shows standard Vanilla learning curves (without \methodname) across multiple noise distributions and learning-rate schedules.  
Although label noise can cause slight divergence between validation and test sets, their accuracies still track each other closely under all regimes.  
This consistent co-variation confirms that validation accuracy reliably reflects generalization trends, even when labels are corrupted.  
Consequently, marginal coverage weights derived from validation scores enable \methodname to identify expert anchors and propagate their structural residual forward without ever referencing the test set.

\begin{figure}[h!]
  \centering
  \includegraphics[width=\linewidth]{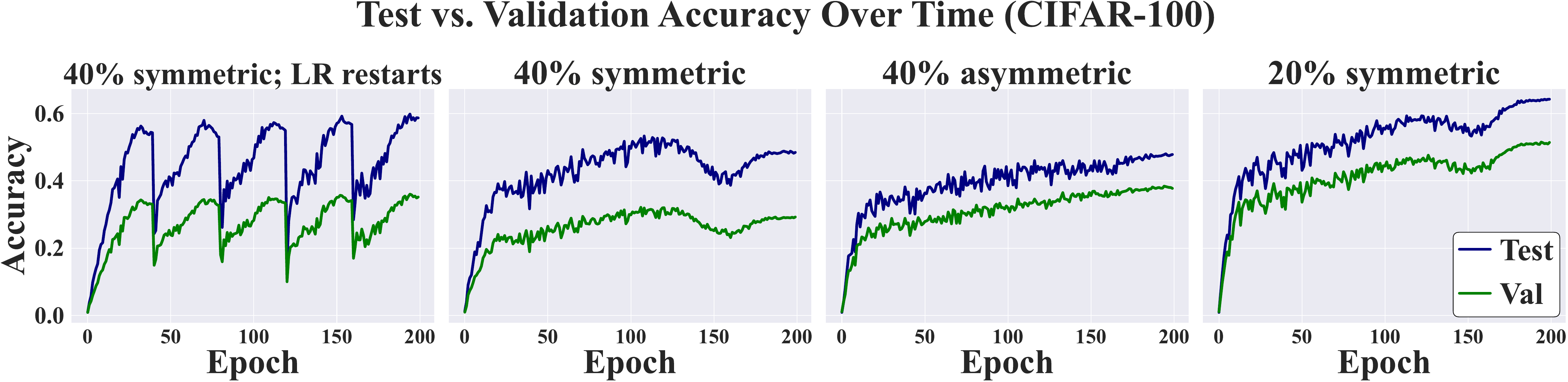}
    \caption{
        Test vs.\ validation accuracy over time for CIFAR-100 under 4 settings (from left to right):  
        1) 40\% symmetric noise with learning-rate restarts;  
        2) 40\% symmetric noise (monotonic LR);  
        3) 40\% asymmetric noise; and  
        4) 20\% symmetric noise.  
        Blue curves show test accuracy, noting that the test data lacks label noise (thus the high accuracy values). The green curves show validation accuracy, noting that validation data has the same label noise distribution as the training data.
        }
  \label{fig:acc_over_time_test_vs_val_allinone}
\end{figure}

\section{Implementation details}
\label{app:implementationdetails}

We conducted experiments on three image classification datasets: CIFAR-100 \citep{krizhevsky2009learning}, TinyImageNet \citep{le2015tiny}, and CIFAR-100N \cite{wei2022learning}.

On CIFAR-100 and TinyImageNet without label noise, all models were trained for 200 epochs with batch size 32, learning rate 0.01, SGD (momentum 0.9, weight decay 5e-4), cosine annealing, and standard augmentations (horizontal flip, random crop). For noisy label experiments, we used cosine annealing with warm restarts (every 40 epochs), a 0.1 learning rate updated per batch, and a batch size of 64.

To evaluate the impact of blending soft and hard labels, we conducted experiments using a variety of \(\beta\) schedules - both constant and varying - to govern the final mixture,  
\[
\tilde y_{t+1}(x) = (1-\beta_t)\,\mathrm{one\text{-}hot}(y) + \beta_t\,Q_t(x).
\]  
To avoid tuning \(\beta\) specifically for optimal performance, we first tested candidate functions on a small subset of the data and selected the most promising configuration, which was then applied to the full evaluation as reported in Section~\ref{sec:results}.

For a fair comparison, we re-trained the SAT method~\citep{huang2020sat} shown in Tables~\ref{table:cifar100_combined} and~\ref{table:tinyimagenet_combined} using the same architecture, data, and training scheme as our approach, following the additional guidelines in the paper. As this method is designed to integrate with existing training schemes, this setup ensures consistency. All experiments were run on A5000 GPUs.

\section{Injecting label noise} 
\label{app:injectinglabelnoise}
For label noise experiments, we injected noise using two standard methods \citep{patrini2017making}:
\begin{enumerate}
\item \textbf{Symmetric noise:} a fraction $p$ of labels is randomly selected and replaced uniformly with a different label.
\item \textbf{Asymmetric noise:} a fraction $p$ of labels is randomly selected and altered using a fixed label permutation.
\end{enumerate}

\end{document}